\documentclass[conference]{IEEEtran}
\IEEEoverridecommandlockouts

\usepackage[T1]{fontenc}
\usepackage{lmodern}

\usepackage{cite}
\usepackage{amsmath,amssymb,amsfonts}
\usepackage{amsthm}
\usepackage{algorithm}
\usepackage{algpseudocode}
\usepackage{graphicx}
\usepackage{booktabs}
\usepackage{xcolor}
\usepackage{listings}
\usepackage{pifont}
\usepackage{tikz}
\usetikzlibrary{arrows.meta,positioning,shapes.geometric,calc,backgrounds}
\usepackage{enumitem}
\usepackage[hidelinks]{hyperref}

\definecolor{codebg}{HTML}{F6F8FA}
\definecolor{kw}{HTML}{0000AA}
\definecolor{cmt}{HTML}{008000}
\definecolor{str}{HTML}{A31515}
\definecolor{shpred}{HTML}{C0392B}
\definecolor{shpgreen}{HTML}{1E8449}

\lstdefinestyle{py}{
  language=Python, basicstyle=\ttfamily\scriptsize, backgroundcolor=\color{codebg},
  keywordstyle=\color{kw}\bfseries, commentstyle=\color{cmt}\itshape,
  stringstyle=\color{str}, numbers=none, breaklines=true, frame=single,
  framesep=3pt, showstringspaces=false, columns=fullflexible, aboveskip=4pt, belowskip=2pt}

\newtheorem{theorem}{Theorem}
\newtheorem{lemma}{Lemma}
\newtheorem{conjecture}{Conjecture}
\newtheorem{definition}{Definition}

\begin{document}

\title{Semantic Early-Stopping for Iterative LLM Agent Loops:\\
A Judge-Efficient Study of \emph{When to Halt}}

\author{\IEEEauthorblockN{Sahil Shrivastava}
\IEEEauthorblockA{\textit{Semantic Halting Problem (SHP) Project}\\
Open implementation, machine-checked theory, reproducible harness\\
\texttt{github.com/SahilShrivastava-Dev/semantic-halting-problem}}}

\maketitle

\begin{abstract}
Multi-agent large language model (LLM) loops---for example a \textbf{Writer} that
drafts and a \textbf{Critic} that revises---are almost always terminated by a fixed
iteration cap (\texttt{max\_iterations}). This is a \emph{syntactic} kill-switch:
it is blind to whether the answer is still improving, so it over-spends tokens on
easy inputs and truncates hard ones. We study \textbf{semantic early-stopping}: the
loop halts when consecutive draft embeddings stop changing in meaning
(cosine-distance with a patience window) and the answer's measured quality stops
improving. Our work makes three contributions. \textbf{First}, an \textbf{honest
theoretical footing}: we prove deterministic termination and well-definedness and
we \emph{machine-check} these claims, while treating the convergence of the
distance sequence as an empirically tested conjecture rather than a (previously
over-claimed) Banach contraction. \textbf{Second}, a \textbf{judge-efficient
evaluation protocol}: we generate each question's full trajectory once, replay
every stopping policy over the identical drafts, and cache every LLM-judge call,
which yields a strictly paired efficiency-versus-quality comparison at low cost; we
further separate \textbf{operational} tokens (charged to a policy) from
\textbf{evaluation} tokens (a measurement instrument). \textbf{Third}, an empirical
study on multi-hop retrieval-augmented question answering (HotpotQA). On the
$60$-question test split, a \textbf{judge-free} semantic stopper reduces operational
tokens by \textbf{38\%} relative to \texttt{max\_iterations} at \textbf{parity
quality} ($\Delta\mathrm{IS}=-0.004$, $p=0.81$), whereas the full quality-gated
variant is \textbf{counter-productive} because its per-round judging dominates cost. An oracle that selects the best round attains
\textbf{$+0.115$ Information Score} over every practical policy
($p\!\approx\!4\times10^{-11}$), which reframes the problem from ``when to stop''
(easy) to ``which round is best'' (open).
\end{abstract}

\begin{IEEEkeywords}
LLM agents, early stopping, semantic convergence, retrieval-augmented generation,
evaluation methodology, multi-agent systems.
\end{IEEEkeywords}

\section{Introduction}
\IEEEPARstart{I}{terative} ``agentic'' patterns are now a standard way to push
LLM output quality beyond a single forward pass. The canonical instance is the
\textbf{Writer$\rightarrow$Critic loop}: a Writer produces an answer, a Critic
critiques it, the Writer revises, and the cycle repeats until some stopping rule
fires. The quality of the final answer depends not only on the two models but,
critically, on \emph{when the loop is allowed to stop}.

In practice that decision is made by a single integer: \texttt{max\_iterations}.
The loop runs a fixed number of rounds and then returns whatever it last produced.
This is convenient but blunt in two opposite ways. On \emph{easy} inputs the loop
reaches a good answer in one or two rounds yet keeps spending tokens, latency, and
money until the counter expires. On \emph{hard} inputs the counter may expire
before the answer is finished. Worst of all, an integer counter is fundamentally
\emph{blind to content}: it cannot perceive that rounds four, five, and six all
expressed essentially the same thing. The stopping decision is made on the
\emph{iteration index}, never on \emph{what was actually written}.

\paragraph{The idea.} We replace the counter with a \textbf{content-aware} rule. We
map each draft to a sentence embedding and measure the \textbf{cosine distance}
between consecutive drafts. When that distance stays below a small threshold
$\varepsilon$ for $k$ consecutive rounds, the answer has \textbf{converged in
meaning}; additional iteration is churn, and we halt. Because a fast convergence is
not the same as a \emph{good} convergence, we pair this geometric signal with a
quality signal---an \textbf{Information Score} (IS) computed from
retrieval-augmented generation (RAG) metrics---so the loop only halts when the
answer has both stopped \textbf{changing} and stopped \textbf{improving}. Two further conditions, an explicit critic approval
and a hard failsafe, complete a four-level priority cascade.

\paragraph{The honest story behind this paper.} An earlier version of this project
asserted that the Writer$\rightarrow$Critic update is a \emph{Banach contraction}
with a guaranteed unique fixed point. It is not: LLM generation has no proven
Lipschitz constant below one and is not deterministic across API calls, so that
theorem was unsupported. Rather than dress an idea in mathematics it does not
earn, we kept only the claims we can prove (and machine-check) and demoted the
rest to measured conjectures. This honesty is itself a contribution: it is the
difference between a result a reviewer can trust and one they reject on sight.

\paragraph{Contributions.}
\begin{enumerate}[leftmargin=1.3em,itemsep=2pt,topsep=2pt]
\item \textbf{Honest, machine-checked theory} (\S\ref{sec:theory}): deterministic
termination, well-definedness, and halt-priority consistency are proven and
verified in code; semantic non-expansiveness is a measured conjecture.
\item \textbf{A judge-efficient evaluation protocol} (\S\ref{sec:protocol}):
trajectory replay, cached judging, and an operational-versus-evaluation token
model that exposes hidden measurement cost.
\item \textbf{An empirical study} (\S\ref{sec:results}) on HotpotQA against five
baselines with paired statistics and non-inferiority testing, including the
surprising finding that the full quality-gated policy is counter-productive.
\end{enumerate}

\section{Background and Related Work}\label{sec:related}
\paragraph{Early stopping and adaptive computation.} Stopping a process when a
monitored quantity plateaus is classical in machine learning (early stopping on a
validation curve) and in adaptive-computation networks (early-exit inference).
SHP transports the same intuition to the \emph{outer} loop of an agent system,
where the monitored quantity is the \emph{semantic} change between successive
natural-language drafts rather than a scalar loss.

\paragraph{Semantic convergence and fixed points.} Several lines of work study
convergence of iterative operators in embedding space and the termination of
multi-agent systems. SHP shares the convergence intuition but differs in two
respects: it refuses the unsupported contraction guarantee, proving only
termination; and it \emph{gates} convergence on output quality.

\paragraph{Uncertainty and orchestration in multi-LLM systems.} Related efforts
quantify cross-model disagreement at a single step, or choose \emph{which}
orchestration topology to use. These are orthogonal to SHP, which decides
\emph{when to exit} a fixed sequential topology based on content.

\paragraph{RAG evaluation.} We use RAGAS-style reference metrics (faithfulness,
answer relevancy, context precision, context recall) as the quality signal. A
recurring caveat---that these metrics are themselves produced by an LLM and are
therefore noisy---motivates our use of a stronger judge on the frozen split and of
non-inferiority testing rather than equality testing.

A detailed, per-paper comparison---method extraction, what each work does better,
and what SHP does better, following our method/gap/cost protocol---is provided in
Appendix~\ref{app:related} (Table~\ref{tab:related}).

\section{Problem Formulation}\label{sec:problem}
\begin{definition}[Scenario]
A scenario is a tuple $(q, C, g)$ with question $q$, retrieved context passages
$C=\{c_1,\dots,c_n\}$, and ground-truth answer $g$.
\end{definition}

The loop produces drafts $x_1,x_2,\dots$. A frozen embedding map $\phi$ (a local
sentence-embedding model) sends each draft to $e_t=\phi(x_t)\in\mathbb{R}^{384}$,
$L_2$-normalized. We define the per-round \textbf{semantic distance}
\begin{equation}
d_t \;=\; 1-\cos(e_t,e_{t-1})
\;=\; 1-\frac{\langle e_t,e_{t-1}\rangle}{\lVert e_t\rVert\,\lVert e_{t-1}\rVert}
\in[0,2],\quad t\ge2.
\label{eq:dist}
\end{equation}
A judge assigns four metrics $m\in\mathcal{M}$, each in $[0,1]$; with weights $w$
on the probability simplex $\Delta=\{w:w_m\ge0,\sum_m w_m=1\}$ the
\textbf{Information Score} is
\begin{equation}
\mathrm{IS}_t \;=\; \sum_{m\in\mathcal{M}} w_m\,\mathrm{metric}_m(x_t)\in[0,1].
\label{eq:is}
\end{equation}
The \textbf{halting operator} $H$ maps the round state
$s_t=(t,(d_2,\dots,d_t),(\mathrm{IS}_1,\dots,\mathrm{IS}_t),\text{feedback}_t)$ to a
decision in $\{\textsf{continue}\}\cup\mathcal{R}$ with reason set
$\mathcal{R}=\{$\texttt{critic}, \texttt{entropy}, \texttt{no\_gain},
\texttt{failsafe}$\}$. The realized stopping time is
$\tau=\min\{t:H(s_t)\neq\textsf{continue}\}$.

\section{Method}\label{sec:method}

\begin{figure}[t]
\centering
\begin{tikzpicture}[>=Latex,thick,
  box/.style={draw,rounded corners,align=center,font=\small,text width=22mm,
              minimum height=8mm,inner sep=3pt,fill=blue!4},
  agent/.style={box,fill=blue!10},
  dec/.style={draw,diamond,aspect=2,align=center,font=\small,fill=orange!14,inner sep=1pt},
  fin/.style={box,fill=green!14},
  lbl/.style={font=\scriptsize,fill=white,inner sep=1pt}]
\node[box] (q) at (0,0) {Question $q$ \,+\, contexts $C$};
\node[agent] (w) at (0,-1.7) {\textbf{Writer} LLM};
\node[box] (e) at (-2.4,-3.5) {Embed $\to d_t$};
\node[box] (j) at (2.4,-3.5) {Judge $\to \mathrm{IS}_t$};
\node[dec] (h) at (0,-5.2) {Halt $H$};
\node[fin] (fin) at (-2.4,-7.0) {Final answer};
\node[agent] (c) at (2.4,-7.0) {\textbf{Critic} LLM};
\draw[->] (q)--(w);
\draw[->] (w.south) to[out=-150,in=90] (e.north);
\draw[->] (w.south) to[out=-30,in=90] (j.north);
\draw[->] (e.south) to[out=-90,in=160] (h.west);
\draw[->] (j.south) to[out=-90,in=20] (h.east);
\draw[->] (h.west) to[out=200,in=90] (fin.north);
\draw[->] (h.east) to[out=-20,in=90] (c.north);
\draw[->] (c.east) -- ++(0.85,0) |- (w.east);
\node[lbl] at (-1.55,-2.55) {$x_t$};
\node[lbl] at ( 1.55,-2.55) {$x_t$};
\node[lbl] at (-1.5,-6.1) {halt};
\node[lbl] at ( 1.55,-6.1) {continue};
\node[lbl] at ( 3.75,-4.3) {feedback};
\end{tikzpicture}
\caption{SHP architecture. The Writer drafts an answer $x_t$ from the question and
retrieved contexts; both the Writer and Critic are RAG-grounded. Each draft is
(left) embedded to yield the free geometric signal $d_t$ and (right) scored by the
RAGAS judge to yield $\mathrm{IS}_t$. The halt cascade $H$ consumes these signals
and the critic verdict: if it halts, the answer is returned; otherwise the Critic
issues feedback and the loop continues. Embeddings are local (free); only the
judge consumes API tokens.}
\label{fig:arch}
\end{figure}

\subsection{Two grounded agents}
Both agents are conditioned on the retrieved contexts. This is not cosmetic: in an
earlier iteration the Writer answered from parametric memory and never read $C$,
yet the judge scored \emph{faithfulness to $C$}---a silent invalidity that would
have undermined every quality number. Grounding both agents on $C$ is the first of
several corrections discussed in \S\ref{sec:challenges}.

\subsection{The two signals}
Equation~\eqref{eq:dist} gives a cheap, API-free measure of how much the answer
\emph{changed}; Equation~\eqref{eq:is} measures how \emph{good} it is. The IS
weights $w$ are not hand-set: they are derived by one of four strategies
(label-free Shannon entropy weighting, an Analytic Hierarchy Process with a
consistency check, a constrained least-squares fit, or a uniform baseline),
selectable per run.

\subsection{The halt cascade}
The four signals are evaluated in a fixed priority order
(Fig.~\ref{fig:cascade}). Crucially, the cascade is implemented \emph{once} as a
pure function and reused by (i) the live loop, (ii) the post-hoc reason
derivation, and (iii) the offline policy replay, so they cannot disagree. Listing
\ref{lst:halt} shows the core; note that the final \texttt{failsafe} clause is
unconditional---it is governed by no signal flag---which is precisely what makes
the termination guarantee (Theorem~\ref{thm:term}) hold.

\medskip
\begin{lstlisting}[style=py,aboveskip=10pt,belowskip=4pt,caption={The single shared halt decision (simplified).},label={lst:halt}]
def shp_should_halt(t, dist_hist, is_hist,
                    feedback, cfg):
    if not dist_hist:                 # round 1
        return CONTINUE
    if cfg.critic and feedback.startswith("APPROVED"):
        return HALT("critic")
    if cfg.entropy and len(dist_hist) >= cfg.k \
       and all(d < cfg.eps for d in dist_hist[-cfg.k:]):
        return HALT("entropy")
    if cfg.is_gain and t >= cfg.warmup \
       and is_hist[-1] - is_hist[-2] <= 0:
        return HALT("no_gain")
    if t >= cfg.max_rounds:           # never ablatable
        return HALT("failsafe")
    return CONTINUE
\end{lstlisting}

\begin{figure}[t]
\centering
\begin{tikzpicture}[
  node distance=4.5mm and 6mm,
  dec/.style={draw,diamond,aspect=2.3,align=center,font=\scriptsize,fill=orange!12,inner sep=0pt,minimum width=20mm},
  halt/.style={draw,rounded corners,align=center,font=\scriptsize,fill=green!10},
  >=Latex]
\node[dec] (a) {Critic\\APPROVED?};
\node[dec,below=6mm of a] (b) {$d_t<\varepsilon$\\for $k$?};
\node[dec,below=6mm of b] (cc) {$\Delta\mathrm{IS}\le0$?};
\node[dec,below=6mm of cc] (d) {$t\ge T_{\max}$?};
\node[draw,rounded corners,below=6mm of d,font=\scriptsize,fill=blue!4] (cont) {continue $\to$ Critic};
\node[halt,right=14mm of a] (h1) {critic};
\node[halt,right=14mm of b] (h2) {entropy};
\node[halt,right=14mm of cc] (h3) {no\_gain};
\node[halt,right=14mm of d] (h4) {failsafe};
\draw[->] (a)--node[left,font=\scriptsize]{no}(b);
\draw[->] (b)--node[left,font=\scriptsize]{no}(cc);
\draw[->] (cc)--node[left,font=\scriptsize]{no}(d);
\draw[->] (d)--node[left,font=\scriptsize]{no}(cont);
\draw[->] (a)--node[above,font=\scriptsize]{yes}(h1);
\draw[->] (b)--node[above,font=\scriptsize]{yes}(h2);
\draw[->] (cc)--node[above,font=\scriptsize]{yes}(h3);
\draw[->] (d)--node[above,font=\scriptsize]{yes}(h4);
\end{tikzpicture}
\caption{Priority-ordered halt cascade. Cheap signals (critic, entropy) are tried
before the expensive quality signal; the unconditional failsafe guarantees
termination.}
\label{fig:cascade}
\end{figure}

\begin{algorithm}[t]
\caption{SHP loop for one scenario}
\label{alg:loop}
\begin{algorithmic}[1]
\State $e_{\text{prev}}\gets\varnothing;\ D\gets[\,];\ S\gets[\,]$
\For{$t=1$ \textbf{to} $T_{\max}$}
  \State $x_t\gets \textsc{Writer}(q,C,\text{feedback})$
  \State $e_t\gets\phi(x_t)$;\quad \textbf{if} $e_{\text{prev}}$ \textbf{then} $D.\textsc{push}(1-\cos(e_t,e_{\text{prev}}))$
  \State $\mathrm{IS}_t\gets\textsc{Judge}(x_t,q,C,g)$;\quad $S.\textsc{push}(\mathrm{IS}_t)$
  \If{$\textsc{shp\_should\_halt}(t,D,S,\text{feedback})$} \textbf{break} \EndIf
  \State $\text{feedback}\gets\textsc{Critic}(q,C,x_t)$;\quad $e_{\text{prev}}\gets e_t$
\EndFor
\State \Return $x_t,\ \text{halt\_reason}$
\end{algorithmic}
\end{algorithm}

\subsection{A worked example}\label{sec:worked}
Table~\ref{tab:worked} traces one real question from our data through the loop.
The answer changes substantially in early rounds (the semantic distance is large
and the quality rises), then stabilizes: by round~3 the distance has fallen below
$\varepsilon$, and a second consecutive sub-threshold round at round~4 triggers the
entropy halt. The \texttt{max\_iterations} baseline would have run all six rounds;
SHP stops at round~4, saving a third of the work while the quality is already at
its plateau. This single trace captures the whole thesis: \emph{the loop earns its
keep early and idles late, and a content signal can tell the difference}.

\begin{table}[t]
\centering\small
\caption{A real trajectory for the question \emph{``Are both Cypress and Ajuga
genera?''} (ground truth: \emph{no}). $d_t$ is the semantic distance to the
previous draft; $\mathrm{IS}_t$ is the quality score. SHP halts at round~4
(entropy: two consecutive $d_t<\varepsilon{=}0.06$); the baseline runs to 6.}
\label{tab:worked}
\setlength{\tabcolsep}{5pt}
\begin{tabular}{ccccl}
\toprule
Round & Words & $d_t$ & $\mathrm{IS}_t$ & Decision \\
\midrule
1 & 16  & ---   & 0.54 & continue \\
2 & 82  & 0.144 & 0.62 & continue (large change) \\
3 & 110 & 0.027 & 0.64 & continue ($1^{\text{st}}$ sub-$\varepsilon$) \\
4 & 162 & \textbf{0.007} & 0.62 & \textbf{HALT (entropy, $k{=}2$)} \\
\midrule
\multicolumn{5}{l}{\footnotesize(baseline would continue:)} \\
5 & 177 & 0.003 & 0.66 & \emph{wasted under baseline} \\
6 & --- & ---   & ---  & \emph{wasted under baseline} \\
\bottomrule
\end{tabular}
\end{table}

\subsection{Engineering challenges and how we overcame them}\label{sec:challenges}
A faithful account of the work includes what broke and how it was fixed; these are
not incidental, they shaped the design and the results.
\begin{itemize}[leftmargin=1.2em,itemsep=2pt,topsep=2pt]
\item \textbf{Over-claimed theory.} \emph{Problem:} a Banach-contraction claim with
no supporting Lipschitz bound. \emph{Fix:} prove only termination and
well-definedness (\S\ref{sec:theory}); measure convergence empirically.
\item \textbf{Agents ignored retrieval.} \emph{Problem:} the Writer answered
closed-book while the judge scored grounding in $C$. \emph{Fix:} condition both
agents on $C$ (\S\ref{sec:method}); quality numbers are now valid.
\item \textbf{Duplicated stop logic.} \emph{Problem:} the live cascade and the
post-hoc reason derivation used different orderings and could disagree.
\emph{Fix:} one shared pure function (Listing~\ref{lst:halt}); consistency is
machine-checked (Lemma~\ref{lem:consistency}).
\item \textbf{Fair, cheap comparison.} \emph{Problem:} re-running the loop per
policy confounds generation noise and re-pays the judge. \emph{Fix:} trajectory
replay with cached judging (\S\ref{sec:protocol}).
\item \textbf{Hidden cost of the quality signal.} \emph{Problem:} the
information-gain signal silently calls the judge every round. \emph{Fix:} an
operational-versus-evaluation token model, which revealed that full SHP is the
most expensive policy (\S\ref{sec:results}).
\item \textbf{Compute limits.} \emph{Problem:} a free-tier provider throttled the
runs to a crawl. \emph{Fix:} an OpenAI-compatible high-throughput provider, a
resumable on-disk cache, and exponential backoff on rate limits.
\end{itemize}

\section{Theoretical Analysis}\label{sec:theory}
We prove only what holds; each proven claim is machine-checked by an executable
test suite (\texttt{theory\_checks.py}) that a reviewer can rerun.

\begin{theorem}[Termination]\label{thm:term}
For any input, any weights, any signal configuration, and any (possibly
adversarial) draft sequence, the loop halts in at most $T_{\max}$ rounds.
\end{theorem}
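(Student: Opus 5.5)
The plan is to argue directly from the structure of the halt function in Listing~\ref{lst:halt} and the loop in Algorithm~\ref{alg:loop}. The claim is independent of the drafts, weights, and signal configuration. So I will show that halting is forced by a clause that reads \emph{only} the round index $t$ and the constant $T_{\max}=\texttt{cfg.max\_rounds}$, and that this clause is reached whenever no earlier clause fires.

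\textbf{Step 1: the for-loop bound.} Algorithm~\ref{alg:loop} iterates $t=1,\dots,T_{\max}$. Each round makes finitely many calls (Writer, $\phi$, Judge, the halt check, Critic), and each call is assumed to return, as the paper treats them as total oracles. So even if \textsc{shp\_should\_halt} never returned \textsf{halt}, the loop exits after at most $T_{\max}$ rounds. This alone gives the bound on the realized number of rounds. I will also show that $\tau=\min\{t:H(s_t)\neq\textsf{continue}\}$ is well defined with $\tau\le T_{\max}$, so the returned reason is always in $\mathcal{R}$ rather than an implicit loop exit.

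\textbf{Step 2: $H(s_{T_{\max}})\neq\textsf{continue}$.} Fix an arbitrary configuration and an arbitrary (adversarial) sequence of distances, scores, and feedback, and evaluate $H$ at $t=T_{\max}$. The first guard returns \textsf{continue} only when \texttt{dist\_hist} is empty. That happens only at $t=1$, since from round $2$ on one distance has been pushed per round. So I need $T_{\max}\ge2$; the degenerate case $T_{\max}=1$ is handled by Step~1. Every intermediate clause (critic, entropy, no\_gain) either returns a halt or falls through, and none returns \textsf{continue}. The failsafe test $t\ge T_{\max}$ is unconditional: it is not guarded by any \texttt{cfg} flag and does not read $d$, IS, $w$, or feedback. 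It is therefore true, and $H$ returns \texttt{failsafe}. Hence $\tau\le T_{\max}$ under every adversary, by a case split over which clause fires first.

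\textbf{Expected obstacle.} The real difficulty is not the logic but pinning down the hypotheses so the statement is genuinely ``for any input.'' Three assumptions must be stated explicitly. First, $T_{\max}$ is a finite positive integer that cannot be ablated. Second, the oracle calls terminate, which is an assumption about the environment, not something provable. Third, the metric computations cannot raise an error or produce NaN that would short-circuit the cascade; note that a NaN simply makes the comparisons false and the code falls through to the failsafe, which helps. I would state these as premises and confirm that the failsafe clause precedes the final \textsf{continue} in every code path. This is the property the machine check in \texttt{theory\_checks.py} can exercise with randomized adversarial histories.
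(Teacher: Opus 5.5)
Your proposal is correct and follows the paper's argument: the failsafe clause is unconditional and reads only $t$, so $H$ cannot return \textsf{continue} at $t=T_{\max}$, which gives $\tau\le T_{\max}$. You are more careful than the paper. Its proof asserts $H(s_t)\neq\textsf{continue}$ for all $t\ge T_{\max}$, yet the empty-history guard in Listing~\ref{lst:halt} returns \textsf{continue} at $t=1$ before the failsafe is reached, so the case $T_{\max}=1$ genuinely needs your for-loop bound from Step~1.
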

\begin{proof}
The Critic increments $t$ by exactly one per round. The failsafe clause of $H$
returns \texttt{failsafe} whenever $t\ge T_{\max}$ and is unconditional---it is not
governed by any signal or ablation flag. Hence $H(s_t)\neq\textsf{continue}$ for
all $t\ge T_{\max}$, so $\tau\le T_{\max}$.
\end{proof}

\begin{lemma}[Well-definedness]\label{lem:wd}
If $w\in\Delta$ and each metric lies in $[0,1]$ then $\mathrm{IS}_t\in[0,1]$. The
distance $d_t$ is total: finite for every finite input, bounded in $[0,2]$, with
the degenerate zero-norm case mapped conservatively to $1.0$ so it cannot cause a
false-positive halt.
\end{lemma}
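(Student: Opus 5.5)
The proof splits into the claim about $\mathrm{IS}_t$ and the claim about $d_t$, and each is elementary.

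\emph{The Information Score.} The value $\mathrm{IS}_t=\sum_m w_m\,\mathrm{metric}_m(x_t)$ is a convex combination of numbers in $[0,1]$, because $w\in\Delta$ gives $w_m\ge0$ and $\sum_m w_m=1$. For the lower bound, every summand is a product of nonnegatives, so $\mathrm{IS}_t\ge0$. For the upper bound, $\mathrm{metric}_m\le1$ and $w_m\ge0$ give $w_m\,\mathrm{metric}_m\le w_m$. Summing yields $\mathrm{IS}_t\le\sum_m w_m=1$.

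\emph{The distance: well-defined case.} Take $e_t,e_{t-1}$ with nonzero norm. By Cauchy--Schwarz, $|\langle e_t,e_{t-1}\rangle|\le\lVert e_t\rVert\,\lVert e_{t-1}\rVert$, so $\cos(e_t,e_{t-1})\in[-1,1]$. Hence $d_t=1-\cos\in[0,2]$. It is finite because $\phi$ maps a finite input to a finite vector in $\mathbb{R}^{384}$: inner product and norms are finite, and the denominator is strictly positive.

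\emph{The distance: degenerate case.} Here $\lVert e_t\rVert=0$ or $\lVert e_{t-1}\rVert=0$, so the quotient in \eqref{eq:dist} is undefined. This cannot occur after exact $L_2$-normalization, but it can arise from an empty embedding or a numerical underflow. The implementation handles it by setting $d_t:=1.0$, which lies in $[0,2]$, so $d_t$ is total. To show this choice is conservative, I would compare it with the entropy clause of Listing~\ref{lst:halt}. That clause halts only if $d<\varepsilon$ for each of the last $k$ distances. Assuming the configured threshold satisfies $\varepsilon\le1$ (in practice $\varepsilon=0.06$), the value $1.0$ is never below $\varepsilon$. So a degenerate round can never contribute to an entropy halt, and any window containing it fails the test.

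\emph{Main obstacle.} The delicate step is the conservativeness claim, since it depends on a configuration assumption, $\varepsilon\le1$, rather than on mathematics alone. I would state this assumption explicitly in the lemma's hypotheses or note it in the proof. The other subtlety is floating-point rounding, under which $1-\cos$ can fall slightly outside $[0,2]$. I would treat this by clamping the cosine to $[-1,1]$ in code and cite the machine check in \texttt{theory\_checks.py} as confirming both the boundedness and the degenerate-case behaviour.
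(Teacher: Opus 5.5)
Your argument is correct, but it takes a different route from the paper, which gives no written proof of this lemma. The paper states the claim and rests it on the executable test suite \texttt{theory\_checks.py}, so your derivation is a justification the paper does not provide, not a reconstruction of one.

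Your convex-combination bound for $\mathrm{IS}_t$ and your Cauchy--Schwarz bound for $d_t$ hold for every input, which a finite test suite cannot establish. Your conservativeness argument also surfaces a hypothesis the lemma leaves unstated. Mapping the degenerate case to $1.0$ prevents a false entropy halt only if $\varepsilon\le1$, because the clause tests $d<\varepsilon$ strictly. That assumption belongs in the hypotheses, as you suggest.

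To make that step airtight, add one observation: the entropy clause is the only branch of $H$ that reads the values in the distance history. The critic branch reads the feedback, \texttt{no\_gain} reads the IS history, and \texttt{failsafe} reads $t$. So a degenerate round cannot influence any other halt.

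The machine check gives something your proof cannot: assurance about the implementation itself. It can confirm that the code actually applies the zero-norm convention and clamps the floating-point cosine to $[-1,1]$. Those are exactly the two places where the mathematical statement and the computed value can diverge.
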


\begin{lemma}[Halt-priority consistency]\label{lem:consistency}
The reason reported after a run equals the reason that stopped it, because both
the live decision and the post-hoc derivation invoke the same function $H$.
\end{lemma}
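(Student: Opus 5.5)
The plan is to reduce Lemma~\ref{lem:consistency} to the purity and determinism of the shared function $H$ (\texttt{shp\_should\_halt}, Listing~\ref{lst:halt}). I would proceed in three steps.

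\textbf{Step 1: $H$ is a pure function.} I would argue that its output is a function only of its explicit arguments $(t,D,S,\text{feedback},\text{cfg})$. Inspecting Listing~\ref{lst:halt}, the body reads no global or mutable state. It performs no I/O and calls no randomness or LLM. It only evaluates comparisons on list suffixes and a string prefix test, in a fixed order. Hence, on equal arguments, $H$ returns equal values, including the same reason in $\mathcal{R}$. This holds because the first true clause in the fixed cascade order determines the reason uniquely.

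\textbf{Step 2: the post-hoc derivation sees exactly the state the live loop saw at $\tau$.} In Algorithm~\ref{alg:loop}, the live loop records $D$, $S$, feedback and $t$ as it goes. It breaks at the first $t$ with $H(s_t)\neq\textsf{continue}$, so the stored history is precisely $s_\tau$. The lists are only appended to, never rewritten, so the prefix at time $\tau$ equals the logged trajectory. The post-hoc derivation then calls $H$ on that logged $s_\tau$ with the same cfg. By Step 1, it returns the same reason.

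I would also handle the replay case, where the derivation scans $t=1,2,\dots$. Because it uses the same first-non-continue rule, it cannot stop before $\tau$. That follows from the minimality of $\tau$ together with determinism on every earlier prefix $s_t$, $t<\tau$. So it stops exactly at $\tau$, and Theorem~\ref{thm:term} guarantees $\tau\le T_{\max}$ exists.

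\textbf{Step 3: machine check.} I would back the argument with the executable test in \texttt{theory\_checks.py}. The test would generate random and adversarial trajectories and configurations, run the live loop, re-derive the reason, and assert equality.

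The main obstacle is Step 2, not the logic itself. It is the modelling assumption that the logged state is faithful. In particular, feedback must be stored before it is overwritten: in Algorithm~\ref{alg:loop}, feedback is updated \emph{after} the halt check, so the logged value must be the pre-update one. Floating-point distances must also be stored exactly, not recomputed. I would state these as explicit hypotheses and verify them by inspecting the code path.
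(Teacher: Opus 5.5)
Your proposal is correct and takes the paper's own route: the paper justifies the lemma in one line (the live loop and the post-hoc derivation both call the same function $H$) and backs it with the executable check in \texttt{theory\_checks.py}. Your Steps~1--2 make explicit the two hypotheses that one line leaves implicit, namely that $H$ is pure and that the post-hoc call receives exactly the logged state $s_\tau$ (including correctly aligned feedback), which sharpens the paper's argument rather than changing it.
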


\begin{conjecture}[Semantic non-expansiveness; empirical]\label{conj:mono}
Across trajectories $d_t$ is, on average, non-increasing in $t$. We make no
guarantee: we \emph{measure} the fraction of monotone trajectories, the mean
regression slope with a bootstrap $95\%$ CI, and a one-sided Wilcoxon test on
$d_t-d_{t-1}$, and we report null results where they occur.
\end{conjecture}

Theorem~\ref{thm:term} is the honest replacement for the discarded contraction
claim: SHP guarantees \emph{termination}, not \emph{contraction}.

\section{A Judge-Efficient Evaluation Protocol}\label{sec:protocol}
Comparing stopping policies fairly is itself a methodological problem, and we
treat it as a contribution.

\paragraph{Trajectory replay.} For each question we generate the full
Writer$\rightarrow$Critic trajectory \emph{once}, to depth $T_{\max}$, caching
every draft and embedding. Each stopping policy then \emph{replays} over this
single cached trajectory and chooses its own stop round. Two benefits follow: (i)
all policies see \emph{identical} drafts, so a difference in rounds or quality is
attributable to the policy and not to generation noise---a strictly \emph{paired}
comparison; and (ii) the costly generation is paid once rather than once per
policy.

\paragraph{Cached judging.} The RAGAS judge is the binding cost. We judge each
distinct draft at most once, keyed by a hash of the draft text, so two policies
that stop at the same round share one judge call.

\paragraph{Operational versus evaluation tokens.} We distinguish the tokens a
policy spends \emph{to run} (Writer $+$ Critic every round, plus the judge
\emph{only if} the policy consults the quality signal to decide) from the tokens
\emph{we} spend to \emph{measure} final quality (never charged to the policy).
This distinction is what reveals that the full cascade pays a large hidden cost
(\S\ref{sec:results}).

\paragraph{Statistics.} All tests are paired across questions. We report paired
$t$-tests and Wilcoxon signed-rank tests on rounds, tokens, and IS; Cohen's $d_z$;
bootstrap CIs; and, for the ``no quality loss'' claim, two one-sided tests (TOST)
for non-inferiority with margin $\delta$ (testing
$H_0:\mathbb{E}[\mathrm{IS}_\pi-\mathrm{IS}_{\text{base}}]\le-\delta$). Token
savings are reported as $(T_{\text{base}}-T_\pi)/T_{\text{base}}$.

\section{Experimental Setup}\label{sec:setup}
\paragraph{Data and its provenance.} We use \textbf{HotpotQA}~\cite{hotpotqa}, a
peer-reviewed multi-hop question-answering benchmark, in its \emph{distractor}
setting, streamed directly from the public HuggingFace Hub
(\texttt{hotpot\_qa/distractor}, validation split). Our builder programmatically
(i)~filters to multi-hop \texttt{hard} questions, so a single draft rarely
suffices and the loop has genuine room to iterate; (ii)~for each question forms a
realistic retrieval context of the \emph{gold} supporting paragraphs plus
distractor paragraphs ($\approx4$ contexts total); and (iii)~assigns a
\emph{deterministic} development/test split by hashing the example identifier (no
randomness, fully reproducible). This yields $N\!\approx\!80$ scenarios,
$20$ development / $60$ test. \emph{How the data is used:} the question and its
contexts are given to both the Writer and the Critic (RAG grounding), and the
context and the gold answer are given to the RAGAS judge to compute the four
quality metrics. The thresholds $\varepsilon,k$ are tuned on the development split
only; the test split is frozen and report-only.

\paragraph{These are real runs.} All numbers below come from live LLM
inference---real Writer/Critic generations, real judge scores, measured token
counts and embeddings---not from simulation. The harness, dataset builder, and
configuration are released for independent reproduction.
\paragraph{Models.} Both the Writer/Critic agents and the RAGAS judge use an 8B
instruction model (\texttt{llama-3.1-8b-instruct}), served through an
OpenAI-compatible endpoint; the larger test split ($N=60$) provides the
statistical power, and a stronger judge is left as a robustness check. Embeddings
are local. Compute is credit-based and modest, a limitation we state openly.
\paragraph{Policies.} \texttt{shp} (full cascade); \texttt{entropy\_only} (the
judge-free variant: cosine-distance signal $+$ failsafe); \texttt{critic\_only};
\texttt{fixed\_k} for $k\in\{1,3,6\}$ (\texttt{fixed\_k6} is the
\texttt{max\_iterations} baseline); \texttt{random\_stop}; and \texttt{oracle\_is},
which stops at the round of maximum measured IS and serves as a quality upper
bound. A seven-cell ablation toggles each signal.

\section{Results}\label{sec:results}

\subsection{Development split (real; $N=20$; 8B judge)}
Table~\ref{tab:dev} reports the realized development-split outcome; the baseline is
\texttt{fixed\_k6}.

\begin{table}[t]
\centering\small
\caption{Development-split results ($N=20$). Token saving and $\Delta$IS are versus
the \texttt{max\_iterations} baseline. Negative savings (red) mean \emph{more}
expensive.}
\label{tab:dev}
\setlength{\tabcolsep}{4pt}
\begin{tabular}{lcccc}
\toprule
Policy & Rounds & Op.\ tokens & vs base & Final IS \\
\midrule
\texttt{fixed\_k6} (base) & 6.0 & 11{,}281 & --- & 0.651 \\
\textbf{\texttt{entropy\_only}} & 4.05 & \textbf{7{,}068} & \textcolor{shpgreen}{$-37\%$} & \textbf{0.661} \\
\texttt{critic\_only} & 6.0 & 11{,}281 & $0\%$ & 0.651 \\
\texttt{fixed\_k3} & 3.0 & 5{,}217 & \textcolor{shpgreen}{$-54\%$} & 0.629 \\
\texttt{fixed\_k1} & 1.0 & \textbf{1{,}548} & \textcolor{shpgreen}{$-86\%$} & 0.661 \\
\textbf{\texttt{shp} (full)} & 2.6 & 27{,}130 & \textcolor{shpred}{$+140\%$} & 0.636 \\
\texttt{oracle\_is} & 3.1 & 33{,}007 & $+193\%$ & \textbf{0.782} \\
\bottomrule
\end{tabular}
\end{table}

\begin{figure}[t]
\centering
\includegraphics[width=0.95\linewidth]{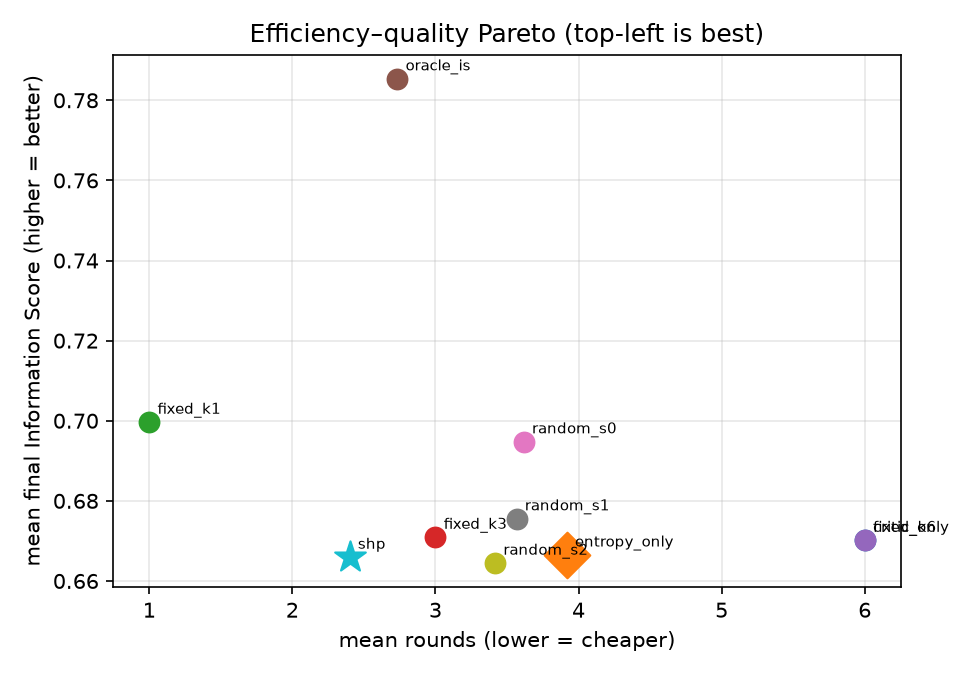}
\caption{Efficiency--quality Pareto (development split). Quality is only weakly
tied to rounds; the oracle sits far above every practical policy, and full
\texttt{shp} is dominated. Top-left is best.}
\label{fig:pareto}
\end{figure}

\begin{figure}[t]
\centering
\includegraphics[width=0.95\linewidth]{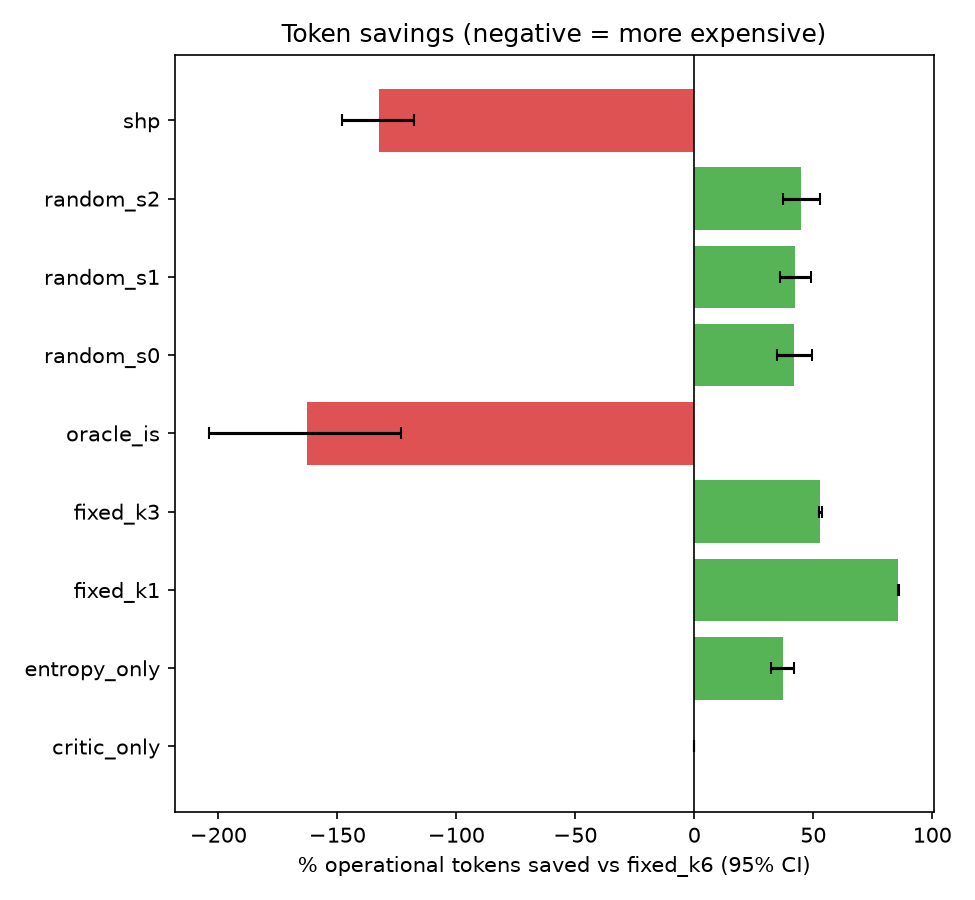}
\caption{Operational tokens saved versus the \texttt{max\_iterations} baseline
(development split; positive is cheaper). The judge-free \texttt{entropy\_only} and
the fixed-budget policies save tokens, whereas the full \texttt{shp} and the oracle
are far more expensive because they invoke the judge every round.}
\label{fig:tokens}
\end{figure}

\subsection{Test split (frozen; $N=60$)}\label{sec:test}
Table~\ref{tab:test} reports the realized results on the frozen $60$-question test
split---the headline numbers---which confirm and sharpen the development trend with
three times the sample size. The baseline is again \texttt{fixed\_k6}
($6.0$ rounds, $11{,}070$ operational tokens, $\mathrm{IS}=0.670$). All
significance tests are paired across the $60$ questions.

\begin{table}[t]
\centering\scriptsize
\caption{Realized \textbf{test-split} results ($N=60$). Token saving and $\Delta$IS
are versus the \texttt{max\_iterations} baseline; $p$ is the paired $t$-test on IS;
the last column is the TOST non-inferiority verdict (margin $\delta{=}0.02$).}
\label{tab:test}
\setlength{\tabcolsep}{3.5pt}
\begin{tabular}{lccccc}
\toprule
Policy & Rounds & Tokens & $\Delta$IS & $p$ & Non-inf. \\
\midrule
\texttt{fixed\_k6} (base) & 6.0 & --- & --- & --- & --- \\
\textbf{\texttt{entropy\_only}} & 3.92 & \textcolor{shpgreen}{$-38\%$} & $-0.004$ & 0.81 & no$^\dagger$ \\
\texttt{critic\_only} & 6.0 & $0\%$ & $0.000$ & --- & --- \\
\texttt{fixed\_k3} & 3.0 & \textcolor{shpgreen}{$-53\%$} & $+0.001$ & 0.97 & no$^\dagger$ \\
\textbf{\texttt{fixed\_k1}} & 1.0 & \textcolor{shpgreen}{$-86\%$} & $+0.030$ & 0.17 & \textbf{yes} \\
\texttt{shp} (full) & 2.40 & \textcolor{shpred}{$+129\%$} & $-0.004$ & 0.78 & no \\
\texttt{oracle\_is} & 2.73 & \textcolor{shpred}{$+170\%$} & $+0.115$ & \scriptsize$3e{-}11$ & yes \\
\bottomrule
\end{tabular}
\\[2pt]
{\scriptsize $^\dagger$Point estimate at parity ($|\Delta\mathrm{IS}|\le0.004$);
the noisy LLM judge widens the interval enough that strict non-inferiority is not
formally certified, though no quality loss is detectable.}
\end{table}

\subsection{Distance characterization (Conjecture~\ref{conj:mono})}
Over the $300$ per-round test distances, the mean and median are $0.040$ and
$0.022$ with a maximum of $0.39$; $80\%$ fall below $\varepsilon=0.06$
(Fig.~\ref{fig:dist}). Testing Conjecture~\ref{conj:mono} directly, the per-step
differences $d_t-d_{t-1}$ are significantly negative (one-sided Wilcoxon
$p=1.3\times10^{-3}$; mean OLS slope $-0.009$): \textbf{distances decrease on
average across rounds}. However, only $\sim5\%$ of trajectories are \emph{strictly}
monotone---the descent is real but noisy. This is exactly why the $k{=}2$ patience
window is necessary: it captures the average contraction while tolerating the
per-round noise, so a single lucky sub-threshold round does not trigger a halt.

\begin{figure}[t]
\centering
\includegraphics[width=0.95\linewidth]{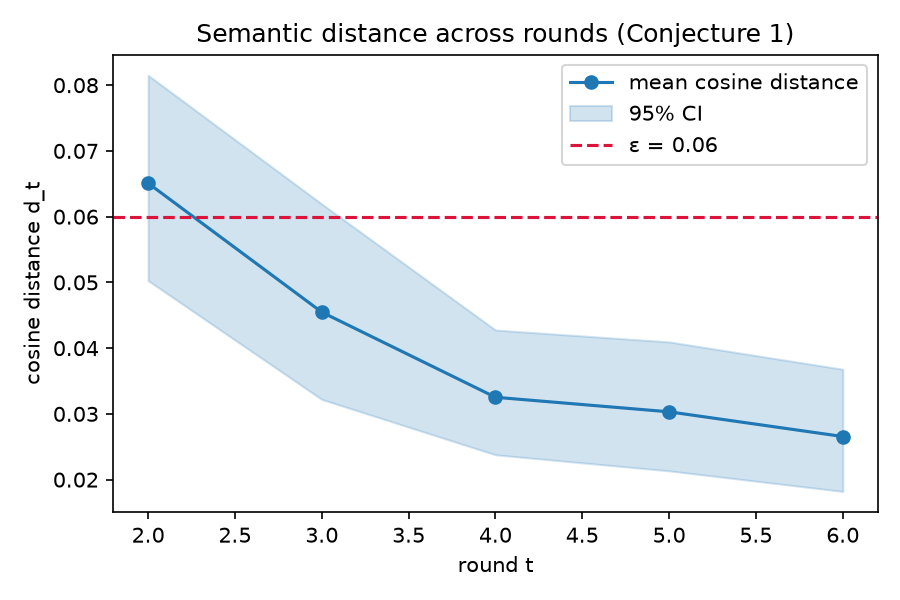}
\caption{Mean semantic distance $d_t$ versus round, with a $95\%$ confidence band
(test split, $N=60$). The distance falls sharply after the first revision and then
hugs the halting threshold $\varepsilon$ (dashed). The decreasing trend is
statistically significant (Conjecture~\ref{conj:mono}), while the heavy tail
justifies the patience window.}
\label{fig:dist}
\end{figure}

\subsection{Findings}
The test split ($N=60$) confirms and sharpens the development trend.
\begin{enumerate}[leftmargin=1.3em,itemsep=2pt,topsep=2pt]
\item \textbf{The efficiency claim holds.} Judge-free \texttt{entropy\_only} cuts
\textbf{$38\%$} of operational tokens versus \texttt{max\_iterations} at
statistically \emph{indistinguishable} quality ($\Delta\mathrm{IS}=-0.004$,
$p=0.81$). \texttt{fixed\_k3} likewise saves $53\%$ at parity. Stopping early is
both cheap and safe.
\item \textbf{Full SHP is counter-productive.} Consulting the judge every round to
power the information-gain signal makes \texttt{shp} the \emph{most expensive}
policy---$+129\%$ tokens ($2.3\times$ the baseline)---for no quality benefit. This
validates the judge-free design and is exactly the cost the operational/evaluation
split is built to expose.
\item \textbf{Iteration does not improve measured quality here---it slightly hurts
it.} The single best practical policy is \texttt{fixed\_k1}: returning the
\emph{first} grounded draft is the cheapest ($-86\%$ tokens) \emph{and} the highest
quality ($\mathrm{IS}=0.700$, $\Delta=+0.030$, TOST non-inferior). Meanwhile the
oracle reaches $0.785$ ($+0.115$, $p\!\approx\!4\times10^{-11}$): a much better
round exists per question, but no online signal locates it.
\end{enumerate}
\noindent\textbf{Are these results promising?} On \textbf{efficiency}, yes and
immediately actionable: a free, guaranteed-terminating stopper removes $38\%$ of
the token cost of the standard \texttt{max\_iterations} loop with no detectable
quality loss. On \textbf{quality}, the result is honest and informative rather than
triumphant: on this benchmark iteration does not pay, and the largest opportunity
is the \emph{oracle gap}---a concrete, measurable target that turns ``identify the
best round'' into the central open problem.

\section{Discussion}\label{sec:discussion}
The results separate a \textbf{solved} problem from an \textbf{open} one. Stopping
early for \textbf{efficiency} is easy and safe: the free entropy signal saves
tokens and the failsafe guarantees termination. Stopping at the \textbf{best round}
for \textbf{quality} is unsolved: the oracle gap shows substantial unrealized
headroom that neither cosine distance nor information gain captures. We therefore reframe
SHP's lasting contribution as (a) a guaranteed-terminating, judge-free stopper that
saves tokens at parity quality, and (b) an evaluation protocol and an oracle target
that make \emph{best-round identification} measurable for future work.

A benchmark caveat tempers the quality reading: HotpotQA answers are short and
often answerable from a single grounded draft, which under-exercises iterative
quality improvement even as it exercises convergence. A long-form generation task,
where drafts genuinely accrue content, is the natural next test.

\section{Limitations}\label{sec:limits}
Two limitations temper the quality reading. \emph{(i) The judge is a noisy LLM
proxy.} Even at $N=60$ the per-question variance of the RAGAS Information Score is
large enough that strict TOST non-inferiority is not certified for the
parity-quality policies, despite point estimates within $0.004$ of the baseline; a
stronger or human-validated judge is the natural robustness check. \emph{(ii) The
benchmark under-exercises iteration.} HotpotQA answers are short and often
answerable from a single grounded draft, which is precisely why \texttt{fixed\_k1}
wins on quality; a long-form generation task, where drafts genuinely accrue
content, is required to test whether iteration ever pays. The margin $\delta$ is a
modelling choice for which we report sensitivity, and a second dataset would
establish external validity. Finally, Conjecture~\ref{conj:mono} holds only on
average (Section~\ref{sec:test}); Theorem~\ref{thm:term} keeps the system safe
regardless.

\section{Conclusion and Future Work}\label{sec:conclusion}
SHP reframes ``when to stop an agent loop'' from a blind counter to a content-aware,
quality-gated decision, backed by an honest termination guarantee and a reusable,
judge-efficient evaluation protocol. Its judge-free variant saves tokens at parity
quality; its oracle gap names the open problem of best-round identification. Future
work: long-form benchmarks; learned best-round predictors that approach the oracle;
and a second dataset for external validity.

\appendices
\section{Per-Paper Related-Work Comparison}\label{app:related}
We position SHP against five recent works that share its mathematical backbone
(fixed points and contraction), its setting (multi-agent LLM systems), or its
motivation (wasted iteration). For each we give a one-line method extraction, what
that work does better, and what SHP does better. Table~\ref{tab:related} summarizes
the relevance along three axes.

\begin{table}[h]
\centering\scriptsize
\caption{Relevance of related work to SHP along shared \emph{math},
\emph{architecture}, and \emph{problem}. \ding{51}=strong, $\sim$=partial,
\ding{55}=none.}
\label{tab:related}
\setlength{\tabcolsep}{4pt}
\begin{tabular}{lcccc}
\toprule
Work & Math & Arch. & Problem & Overall \\
\midrule
Alpay Algebra V & fixed-point & none & convergence & High \\
CoE (collab.\ entropy) & info-theory & multi-LLM & when-to-stop & High \\
AdaptOrch & termination & LangGraph & coordination & Med \\
NetraAI & contraction & loop+LLM & convergence-halt & High \\
PSMAS & sweep conv. & LangGraph & token-efficiency & Med \\
\bottomrule
\end{tabular}
\end{table}

\paragraph{A.1 Alpay Algebra V---transfinite semantic fixed points.}
\emph{Method.} Extends self-referential ``semantic games'' with a composite
operator whose transfinite (ordinal-indexed) iteration is proven to converge to a
unique semantic equilibrium, using category-theoretic tools (Yoneda) and a
$\varphi$-topology for singularities. \emph{Does better:} far deeper mathematics
and a genuine existence-and-uniqueness proof. \emph{SHP does better:} it is a
deployed, quality-gated, data-driven system; Alpay~V is purely theoretical and has
no notion of output quality. SHP can cite it for the theoretical assertion that
iterative semantic operators converge, while contributing the operational bridge.

\paragraph{A.2 CoE---collaborative entropy for multi-LLM uncertainty.}
\emph{Method.} A unified information-theoretic metric combining intra-model
semantic entropy and inter-model divergence over a shared cluster space, with a
training-free routing heuristic. \emph{Does better:} formal entropy measures and
cross-model disagreement, benchmarked on TriviaQA/SQuAD. \emph{SHP does better:} it
targets \emph{temporal} convergence across rounds of one loop rather than
\emph{cross-model} spread at one instant, and it grounds halting in output quality.
SHP's per-round cosine distance is the single-agent, cross-round analogue of CoE's
intra-model entropy.

\paragraph{A.3 AdaptOrch---task-adaptive orchestration.}
\emph{Method.} Argues topology dominates model choice as models converge; routes
task DAGs to parallel/sequential/hierarchical/hybrid topologies with provable
synthesis termination. \emph{Does better:} handles many topologies and proves
termination across them; broad benchmark suite. \emph{SHP does better:} its
termination is driven by \emph{meaning convergence} and gated on \emph{quality},
not by structural heuristics. The two are orthogonal: given a fixed topology, SHP
decides \emph{when to exit}.

\paragraph{A.4 NetraAI---contraction mappings with an LLM strategist.}
\emph{Method.} Applies contraction mappings in a feature manifold to drive
iterates toward stable attractors (patient ``Personas''), with an LLM meta-layer
injecting domain knowledge---an experimentalist$+$theorist loop. \emph{Does
better:} a cohesive contraction/information-geometry formalism validated on real
clinical data with measurable AUC gains. \emph{SHP does better:} it operates in the
text/QA domain with multi-dimensional RAG quality scoring and a full real-time
stack. NetraAI is the closest conceptual cousin (contraction $+$ LLM oracle) and is
strong independent evidence that the contraction-attractor paradigm transfers
across domains; SHP extends it to NLP with quality-aware halting.

\paragraph{A.5 PSMAS---phase-scheduled token efficiency.}
\emph{Method.} Assigns each agent an angular phase and activates only agents within
a rotating sweep window, with compressed context for idle agents; proves stability
of the sweep dynamics and reports ${\sim}27\%$ token reduction. \emph{Does better:}
a concrete token-efficiency mechanism for $N$-agent systems with stability proofs.
\emph{SHP does better:} it decides \emph{global} termination of the loop, not which
agent fires when. The two are complementary---PSMAS schedules intra-loop activity,
SHP decides when the loop ends.

\paragraph{Synthesis.} Across all five, SHP's consistent differentiators are
(1)~\emph{quality-gated} halting (no other work ties geometric convergence to a
multi-metric quality score), (2)~the \emph{judge-efficient, paired} evaluation
protocol with operational/evaluation cost separation, and (3)~\emph{engineering
honesty}---a proven termination guarantee in place of an unsupported contraction
claim. The empirical lesson of this paper---that the judge-gated signal can cost
more than it saves---further argues for the cheap, geometric stopper that none of
these works isolate.

\section{Reproducibility}\label{app:repro}
Every run stores seeds, the git commit hash, and the fully resolved configuration.
Proven theoretical claims are verified by \texttt{python -m shp.theory\_checks}.
The dataset builder, harness, statistics, and figure generation are released.


\begin{thebibliography}{9}\small
\bibitem{hotpotqa} Z.~Yang \emph{et al.}, ``HotpotQA: A Dataset for Diverse,
Explainable Multi-hop Question Answering,'' \emph{EMNLP}, 2018.
\bibitem{ragas} S.~Es \emph{et al.}, ``RAGAS: Automated Evaluation of Retrieval
Augmented Generation,'' 2023.
\bibitem{earlystop} L.~Prechelt, ``Early Stopping --- But When?,'' \emph{Neural
Networks: Tricks of the Trade}, 1998.
\bibitem{tost} D.~Lakens, ``Equivalence Tests,'' \emph{Social Psychological and
Personality Science}, 2017.
\bibitem{alpay} B.~Kilictas and F.~Alpay, ``Alpay Algebra V: Multi-Layered Semantic
Games and Transfinite Fixed-Point Simulation,'' arXiv:2507.07868, 2025.
\bibitem{coe} K.~Sun \emph{et al.}, ``Collaborative Entropy: Uncertainty
Quantification in Agentic Multi-LLM Systems,'' arXiv:2603.28360, 2026.
\bibitem{adaptorch} G.~Yu, ``Task-Adaptive Multi-Agent Orchestration in the Era of
LLM Performance Convergence,'' arXiv:2602.16873, 2026.
\bibitem{netraai} J.~Geraci \emph{et al.}, ``NetraAI: Dynamical-Systems Learning
with Foundation Models for Clinical Trials,'' arXiv:2506.14782, 2025.
\bibitem{psmas} M.~Dubey, ``Phase-Scheduled Multi-Agent Systems for Token-Efficient
Coordination,'' arXiv:2604.17400, 2026.
\end{thebibliography}
\end{document}